\documentclass{article}

\usepackage[preprint]{neurips_2026}
\workshoptitle{E-Values: From Statistics to ML}

\usepackage[utf8]{inputenc}
\usepackage[T1]{fontenc}
\usepackage{hyperref}
\usepackage{url}
\usepackage{booktabs}
\usepackage{amsfonts}
\usepackage{amssymb}
\usepackage{amsmath}
\usepackage{mathtools}
\usepackage{nicefrac}
\usepackage{microtype}
\usepackage{xcolor}
\usepackage{graphicx}
\usepackage{tikz}
\usepackage{dsfont}
\usepackage{amsthm}
\mathtoolsset{showonlyrefs}

\newtheorem{theorem}{Theorem}
\newtheorem{proposition}[theorem]{Proposition}
\newtheorem{definition}[theorem]{Definition}
\newtheorem{remark}[theorem]{Remark}
\newtheorem{lemma}[theorem]{Lemma}

\newcommand{\ZZ}{\mathbf{Z}}
\newcommand{\XX}{\mathbf{X}}
\newcommand{\YY}{\mathbf{Y}}
\newcommand{\UU}{\mathbf{U}}

\usepackage{todonotes}

\title{Information-Geometric Blind Spots of Conformal Test Martingales}

\author{%
  Johan Hallberg Szabadv\'ary \\
  Department of Mathematics, Stockholm University\\
  Department of Computing, J\"onk\"oping University \\
  \texttt{johan.hallberg.szabadvary@math.su.se}
}

\begin{document}

\maketitle

\begin{abstract}
Conformal test martingales (CTMs) are e-processes for sequentially testing exchangeability via p-values produced by a conformity function $A$. However, any distribution shift that preserves the push-forward $P\circ A^{-1}$ leaves the p-value distribution invariant, rendering it invisible to every CTM based on $A$. We formalise this blind spot through the \emph{$A$-equivalence class} $[P]_A$ and show that, for any target distribution $T$, there is a unique closest element $Q^\ast\in[P]_A$ in KL divergence---the \emph{$A$-cryptic I-projection}---with explicit density $q^\ast(z) = t(z)\,p_S(A(z))/t_S(A(z))$ and a Pythagorean decomposition into detectable and cryptic components. When $A$ is the Neyman-Pearson likelihood ratio, the cryptic component vanishes, connecting our result to optimal e-value design. We demonstrate the theory on high-dimensional image data by empirically constructing this I-projection. By conditionally resampling EMNIST letters to identically match the conformity scores of MNIST digits, we demonstrate that a distribution can be shifted to a disjoint support while leaving the test martingale mathematically blinded.
\end{abstract}

\section{Introduction}\label{sec:intro}

Conformal prediction \citep{alrw2} constructs prediction sets with finite-sample validity under the sole assumption of exchangeability. When a fixed \emph{conformity function} $A\colon\ZZ\to\mathbb{R}$ scores examples, the \emph{conformal transducer} maps the data sequence to p-values that are independent and uniform on $[0,1]$ under exchangeability \citep[Thm. 12.1]{alrw2}. A \emph{conformal test martingale} (CTM) is a non-negative martingale built from these p-values; by Ville's inequality \citep{ville1939etude}, martingale growth constitutes evidence against exchangeability \citep{vovk2003testing}. CTMs are thus e-processes \citep{ramdas2022testing} for sequential testing of exchangeability. 

This construction bypasses the impossibility result of \citet{ramdas2022testing}: any non-negative supermartingale adapted to the canonical data filtration is necessarily non-increasing under exchangeability, and hence powerless. CTMs circumvent this by adapting to the strictly coarser filtration $\mathcal{G}_n = \sigma(p_1,\dots,p_n)$. However, this filtration reduction comes at a cost: a shift from $P$ to $Q$ satisfying $P\circ A^{-1} = Q\circ A^{-1}$ leaves the p-value distribution invariant (Proposition \ref{prop:collapse}), rendering it invisible to \emph{every} CTM based on $A$. 

\textbf{Contributions.} We formalise this blind spot and provide an information-geometric characterisation:
\begin{enumerate}
    \item We define the \emph{$A$-equivalence class} $[P]_A$ and show that for independent data streams, a shift is undetectable if the score push-forwards coincide (Section~\ref{sec:background}).
    \item We prove that for any target $T$, there is a unique closest element of $[P]_A$ in KL divergence, the \emph{$A$-cryptic I-projection} $Q^*$. (Section~\ref{sec:iprojection}).
    \item We demonstrate the $A$-cryptic I-projection on MNIST by projecting EMNIST letters onto the digits and verify empirically that the resulting letters are not detected (Section~\ref{sec:experiments}).
\end{enumerate}

\section{Background}
\label{sec:background}

We assume throughout the paper that $\ZZ = \XX \times \YY$ is a standard Borel space. A \emph{conformity function} is a measurable map $A\colon\ZZ\to\mathbb{R}$, and the \emph{conformal transducer} (CT) determined by $A$ outputs p-values
\begin{equation}\label{eq:transducer}
    p_k = \frac{|\{i \le k : \alpha_i < \alpha_k\}| + \tau_k\,|\{i \le k : \alpha_i = \alpha_k\}|}{k},
\end{equation}
where $\alpha_i = A(z_i)$ and $\tau_i \sim \UU[0,1]$ are independent tie-breakers. Under exchangeability, the $p_k$ are IID\ $\UU[0,1]$ \citep[Thm. 12.1]{alrw2}. A \emph{conformal test martingale} is $S_n = F(p_1,\dots,p_n)$, where $F$ is a betting martingale satisfying $\int_0^1 F(u_1,\dots,u_{n-1},u)\,du = F(u_1,\dots,u_{n-1})$ and $F(\square)=1$. We say that the CTM $S$ is determined by the betting martingale $F$ and the conformity function $A$. The intuitive picture is that the betting martingale gambles against the hypothesis that the conformal p-values are IID uniform.

\begin{proposition}\label{prop:collapse}
    Let $z_1,z_2,\dots$ be independent with $z_t \sim P_t$. If $P_t\circ A^{-1} = P_1\circ A^{-1}$ for all $t$, then the CT determined by $A$ outputs IID $\UU[0,1]$ p-values.
\end{proposition}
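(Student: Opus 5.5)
The plan is to reduce the statement to the known exchangeable case, using the fact that the transducer \eqref{eq:transducer} depends on the data only through the scores $\alpha_i = A(z_i)$ and the tie-breakers $\tau_i$.

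First, I will write $p_k = \Phi_k(\alpha_1,\dots,\alpha_k,\tau_k)$ for a fixed measurable map $\Phi_k$. Then the joint law of $(p_1,p_2,\dots)$ is a measurable image of the joint law of $(\alpha_1,\alpha_2,\dots,\tau_1,\tau_2,\dots)$, so it suffices to identify that joint law.

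Second, I will compute that law. Since the $z_t$ are independent and the $\tau_t$ are independent of them, the $\alpha_t = A(z_t)$ are independent with $\alpha_t \sim P_t\circ A^{-1}$. By hypothesis every $\alpha_t$ has the common law $\mu := P_1\circ A^{-1}$, so $(\alpha_t)$ is IID $\mu$ and independent of the IID uniform $(\tau_t)$. This is exactly the joint law that $(A(z_t'),\tau_t)$ has when $z_1',z_2',\dots$ are IID from $P_1$.

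Third, I will transfer the known result. An IID sequence $z_t'\sim P_1$ is exchangeable, so by \citep[Thm.~12.1]{alrw2} the transducer applied to $(z_t')$ outputs IID $\UU[0,1]$ p-values. Because $p_k$ is the same measurable function of $(\alpha_{1:k},\tau_k)$ in both settings, and the input sequences have equal law, the output processes have equal law as processes (equality of finite-dimensional distributions is enough). Hence the p-values for $(z_t)$ are IID uniform.

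The only delicate point is making sure the cited theorem really is a statement about the law of the scores and tie-breakers, so that replacing $(z_t)$ by a sequence with matching score law is legitimate. That is why I factor through the map $\Phi_k$ explicitly rather than arguing about exchangeability of the original $z_t$, which need not hold. No other step should present difficulty.
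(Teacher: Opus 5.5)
Your proof is correct and follows essentially the paper's route: both observe that the p-values depend only on the scores $\alpha_t=A(z_t)$ and the tie-breakers, note that the scores are IID under the hypothesis, and then invoke \citet[Thm.~12.1]{alrw2}. The only difference is the vehicle: the paper applies that theorem directly to the IID score sequence $\alpha_1,\alpha_2,\dots$ with the identity conformity measure, whereas you apply it to an auxiliary IID sequence $z_t'\sim P_1$ and transfer the conclusion by equality in law. The two devices are equivalent.
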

\begin{proof}
    Let $\alpha_t = A(z_t)$. Under independence, the $\alpha_t$ are mutually independent with marginals $\mu_t = P_t\circ A^{-1}$. If $\mu_t = \mu_1$ for all $t$, the scores are IID and hence exchangeable. The CT produces IID\ $\UU[0,1]$ p-values by \citet[Thm. 12.1]{alrw2} (apply the identity function as conformity measure to $\alpha_1,\alpha_2,\dots$).
\end{proof}
Proposition \ref{prop:collapse} can be strengthened to include the reverse implication that, assuming independence, IID uniform p-values imply equal push-forwards, but this is beyond the scope of this short paper.
\begin{definition}[$A$-equivalence class]\label{def:A-equiv}
The \emph{$A$-equivalence class of $P$} is $[P]_A = \{Q : Q\circ A^{-1} = P\circ A^{-1}\}$.
\end{definition}

Proposition~\ref{prop:collapse} implies that for independent data streams (including change-point scenarios), a shift from $P$ to $Q$ is invisible to every CTM based on $A$ if and only if $Q\in[P]_A$. Such pairs $(P,Q)$ are called \emph{$A$-cryptic pairs}.

\section{The $A$-Cryptic I-Projection}\label{sec:iprojection}

Given a target distribution $T$ (e.g., a post-shift distribution), we seek the element of $[P]_A$ closest to $T$ in KL divergence. This characterizes the ``worst-case cryptic adversary'' in the direction of $T$.

\begin{theorem}[Existence and uniqueness of the $A$-cryptic I-projection]\label{thm:iprojection}
Let $P_S = P\circ A^{-1}$, $T_S = T\circ A^{-1}$, and assume $D_{\mathrm{KL}}(P_S\|T_S)<\infty$. There exists a unique $Q^*\in[P]_A$ minimising $D_{\mathrm{KL}}(Q\|T)$ subject to $Q\in[P]_A$, given by
\begin{equation}\label{eq:Qstar}
Q^*(E) = \int_{\mathbb{R}} T_\alpha(E)\,dP_S(\alpha),
\end{equation}
where $\{T_\alpha\}$ is the disintegration of $T$ along $A$ (see Lemma \ref{lemma:disintegration} in Appendix \ref{app:proofs}).
\end{theorem}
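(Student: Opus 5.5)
The plan is to use the chain rule for KL divergence along the map $A$. Disintegrate both distributions along $A$: by Lemma~\ref{lemma:disintegration}, $T(E)=\int T_\alpha(E)\,dT_S(\alpha)$ with $T_\alpha$ concentrated on $A^{-1}(\alpha)$ for $T_S$-a.e.\ $\alpha$. Any $Q\in[P]_A$ has push-forward $Q\circ A^{-1}=P_S$, so it disintegrates as $Q(E)=\int Q_\alpha(E)\,dP_S(\alpha)$. Since $\mathbf{Z}$ is standard Borel, the chain rule for relative entropy holds:
\begin{equation}
D_{\mathrm{KL}}(Q\|T) = D_{\mathrm{KL}}(P_S\|T_S) + \int_{\mathbb{R}} D_{\mathrm{KL}}(Q_\alpha\|T_\alpha)\,dP_S(\alpha).
\end{equation}
This is the Pythagorean decomposition promised in the abstract. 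The first term is the same for every $Q\in[P]_A$ and is finite by hypothesis. The second term is nonnegative.

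\textbf{Existence and minimality.} Define $Q^*$ by \eqref{eq:Qstar}. A preliminary point is that $T_\alpha$ must be defined $P_S$-a.e. This follows because $D_{\mathrm{KL}}(P_S\|T_S)<\infty$ forces $P_S\ll T_S$, so the $T_S$-a.e.\ uniqueness of the disintegration transfers to $P_S$-a.e. Measurability of $\alpha\mapsto T_\alpha(E)$ makes $Q^*$ a probability measure. Because $T_\alpha$ is carried by $A^{-1}(\alpha)$, we get $Q^*(A^{-1}(B))=\int \mathds{1}_B(\alpha)\,dP_S(\alpha)=P_S(B)$, so $Q^*\in[P]_A$. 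Its conditionals are $Q^*_\alpha=T_\alpha$, so the integral term vanishes. Hence $D_{\mathrm{KL}}(Q^*\|T)=D_{\mathrm{KL}}(P_S\|T_S)$, which is a lower bound for all $Q\in[P]_A$. Thus $Q^*$ is a minimiser with finite value.

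\textbf{Uniqueness.} Suppose $Q\in[P]_A$ also attains the minimum. Then $\int D_{\mathrm{KL}}(Q_\alpha\|T_\alpha)\,dP_S=0$, so $Q_\alpha=T_\alpha$ for $P_S$-a.e.\ $\alpha$. Therefore $Q(E)=\int T_\alpha(E)\,dP_S=Q^*(E)$ for every $E$. (Alternatively, strict convexity of $Q\mapsto D_{\mathrm{KL}}(Q\|T)$ on its finite domain, together with convexity of $[P]_A$, gives uniqueness directly.)

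\textbf{Main obstacle.} The hardest step is justifying the chain rule rigorously in the general measurable setting. I would do this via densities. If $Q\ll T$, set $f=dQ/dT$ and $g=dP_S/dT_S$. Then $h(z)=f(z)/g(A(z))$ (on $\{g\circ A>0\}$) serves as the density $dQ_\alpha/dT_\alpha$, giving a regular conditional version. Writing $\log f=\log g(A)+\log h$ and integrating against $Q$ splits the divergence into the two terms; Tonelli applies because the conditional term is nonnegative. If $Q\not\ll T$, then either $P_S\not\ll T_S$, which is excluded, or $Q_\alpha\not\ll T_\alpha$ on a set of positive $P_S$-measure. In both cases the two sides are $+\infty$ together. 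The density formula $q^*(z)=t(z)p_S(A(z))/t_S(A(z))$ then drops out as $dQ^*/dT=g\circ A$.
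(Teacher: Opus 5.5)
Your proposal is correct and follows essentially the same route as the paper. Both arguments use the chain rule for KL divergence along $A$. The marginal constraint fixes the first term, which is finite by hypothesis, and $P_S\ll T_S$ makes $T_\alpha$ well defined $P_S$-a.e. The fibre term then vanishes exactly when $Q_\alpha=T_\alpha$, which gives both optimality and uniqueness of $Q^*$.

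The only real difference is that you sketch a density-based proof of the chain rule, including the $Q\not\ll T$ case, where the paper simply cites Polyanskiy--Wu. One small fix there: $\log h$ is not nonnegative, so Tonelli does not apply directly. Apply it to the positive and negative parts of $\log h$ separately; the negative part is integrable because $x(\log x)^-\le 1/e$.
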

\begin{proof}[Proof sketch]
By the chain rule for KL divergence, $D_{\mathrm{KL}}(Q\|T) = D_{\mathrm{KL}}(Q_S\|T_S) + \mathbb{E}_{Q_S}[D_{\mathrm{KL}}(Q_\alpha\|T_\alpha)]$. The constraint $Q\in[P]_A$ forces $Q_S = P_S$, making the first term constant. The second term is minimised pointwise by setting $Q_\alpha = T_\alpha$, which yields $Q^*$.
\end{proof}

When $P$ and $T$ admit densities $p$ and $t$ with respect to a reference measure, $Q^*$ has an explicit density as follows:

\begin{theorem}[Density of the $A$-cryptic I-projection]\label{thm:density}
Under the assumptions of Theorem~\ref{thm:iprojection}, if $\ZZ$ is equipped with a $\sigma$-finite reference measure $\nu$, and $P$ and $T$ have densities $p$ and $t$ w.r.t. $\nu$, then $Q^*$ has density
\begin{equation}\label{eq:density}
q^*(z) = t(z)\,\frac{p_S(A(z))}{t_S(A(z))},
\end{equation}
w.r.t. $\nu$, where $p_S$ and $t_S$ are the score-marginal densities of $P$ and $T$ (w.r.t. some common $\sigma$-finite dominating measure on $\mathbb{R}$).
\end{theorem}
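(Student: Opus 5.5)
The plan is to verify that the proposed density $q^*$ reproduces the mixture formula \eqref{eq:Qstar} from Theorem~\ref{thm:iprojection}. Uniqueness in that theorem then identifies it with $Q^*$. Let $\lambda$ be the common $\sigma$-finite dominating measure on $\mathbb{R}$, so that $p_S = dP_S/d\lambda$ and $t_S = dT_S/d\lambda$. The hypothesis $D_{\mathrm{KL}}(P_S\|T_S)<\infty$ gives $P_S\ll T_S$. Hence $p_S/t_S = dP_S/dT_S$ is a well-defined Radon--Nikodym derivative $T_S$-a.e. We set it to $0$ where $t_S=0$, which is a $T_S$-null set. Consequently, for $T$-a.e.\ $z$ we have $t_S(A(z))>0$, so the ratio in \eqref{eq:density} makes sense $\nu$-a.e.\ on $\{t>0\}$, while $q^*=0$ on $\{t=0\}$.

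The key step is to compute, for measurable $E\subseteq\ZZ$,
\[
\int_E q^*\,d\nu = \int_E \frac{p_S(A(z))}{t_S(A(z))}\,dT(z) = \int_{\mathbb{R}} \frac{p_S(\alpha)}{t_S(\alpha)}\,T_\alpha(E)\,dT_S(\alpha),
\]
where the second equality is the disintegration of Lemma~\ref{lemma:disintegration}. That lemma gives $\int f\,dT = \int\!\int f\,dT_\alpha\,dT_S(\alpha)$ for nonnegative measurable $f$, and each $T_\alpha$ is concentrated on $A^{-1}(\alpha)$. On that fibre the factor $p_S(A(z))/t_S(A(z))$ is the constant $p_S(\alpha)/t_S(\alpha)$, so it pulls out of the inner integral. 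The resulting identity $\int_E g(A(z))\,dT(z)=\int g(\alpha)T_\alpha(E)\,dT_S(\alpha)$ for nonnegative measurable $g$ can be proved via indicators $g=\mathbf 1_B$ followed by monotone class. In that form it does not even require concentration, because $T(E\cap A^{-1}B)=\int_B T_\alpha(E)\,dT_S$ is exactly the defining property of a disintegration.

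Finally, $\frac{p_S}{t_S}\,dT_S = dP_S$, so the right-hand side equals $\int T_\alpha(E)\,dP_S(\alpha)=Q^*(E)$ by \eqref{eq:Qstar}. This shows $dQ^*/d\nu=q^*$. As a sanity check, taking $E=A^{-1}(B)$ recovers $P_S(B)$, confirming $Q^*\in[P]_A$.

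The main obstacle is measure-theoretic bookkeeping rather than any real difficulty. One part is handling the sets where $t_S(A(z))=0$ or $p_S=\infty$, which is dealt with by the absolute continuity $P_S\ll T_S$ noted above. The other is confirming that $t_S$ and $p_S$, which are defined only up to $\lambda$-null sets, give a $q^*$ well defined $\nu$-a.e. This holds because $T\circ A^{-1}=T_S\ll\lambda$: changing $p_S$ or $t_S$ on a $\lambda$-null set $N$ alters $q^*$ only on $A^{-1}(N)\cap\{t>0\}$, which has $T$-measure zero and hence $\nu$-measure zero within $\{t>0\}$.
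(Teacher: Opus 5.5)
Your proposal is correct and takes essentially the same route as the paper. The paper substitutes $dP_S=(p_S/t_S)\,dT_S$ into \eqref{eq:Qstar} and applies the law of total expectation to get $dQ^*/dT=(p_S/t_S)\circ A$; that is your chain of equalities read in the opposite direction, and you add the null-set and version bookkeeping the paper leaves implicit. One small caveat: with the Lemma as stated in the paper (concentration plus $T(E)=\int T_\alpha(E)\,dT_S$), the localised identity $T(E\cap A^{-1}B)=\int_B T_\alpha(E)\,dT_S$ does rely on concentration, so your aside that it is not needed holds only under the regular-conditional-distribution definition; your main argument via the fibres is unaffected.
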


\textbf{Pythagorean identity.} The $A$-cryptic I-projection satisfies the decomposition
\begin{equation}\label{eq:pythagoras}
D_{\mathrm{KL}}(P\|T) = D_{\mathrm{KL}}(P\|Q^*) + D_{\mathrm{KL}}(Q^*\|T),
\end{equation}
illustrated in Figure~\ref{fig:pythagorean}. The \emph{detectable} component $D_{\mathrm{KL}}(Q^*\|T) = D_{\mathrm{KL}}(P_S\|T_S)$ captures the score-marginal mismatch visible to a CTM, while the \emph{cryptic} component $D_{\mathrm{KL}}(P\|Q^*) = \mathbb{E}_{P_S}[D_{\mathrm{KL}}(P_\alpha\|T_\alpha)]$ quantifies the fibre-conditional divergence hidden from it. The $A$-equivalence class $[P]_A$, defined by fixing the marginal distribution of $A(Z)$, forms a variation-closed convex set in the sense of \citet{csiszar1975idivergence}; the decomposition \eqref{eq:pythagoras} is a special case of Csisz\'ar's I-projection theorem onto linear families.

\begin{figure}[ht]
    \centering
    \begin{tikzpicture}[scale=1.3, >=latex]
        \draw[thick, fill=blue!5, draw=blue!40] (-1.8, 0) -- (2.8, 0) -- (3.6, 1.3) -- (-1.0, 1.3) -- cycle;
        \node[blue!70!black] at (3.1, 0.25) {$[P]_A$};
        \coordinate (P) at (0, 0.65);
        \coordinate (Qstar) at (1.8, 0.65);
        \coordinate (T) at (1.8, 2.5);
        \draw[thick, dashed, ->] (P) -- (Qstar) node[midway, below, font=\small] {$D(P\|Q^*)$ \scriptsize(Cryptic)};
        \draw[thick, ->] (Qstar) -- (T) node[midway, right, font=\small] {$D(Q^*\|T)$ \scriptsize(Visible)};
        \draw[thick, gray, ->] (P) -- (T) node[midway, above left, font=\small] {$D(P\|T)$};
        \filldraw[black] (P) circle (1.5pt) node[below left] {$P$};
        \filldraw[red] (T) circle (1.5pt) node[above] {$T$};
        \filldraw[violet] (Qstar) circle (1.5pt) node[below right] {$Q^*$};
        \draw[thin] (1.6, 0.65) -- (1.6, 0.85) -- (1.8, 0.85);
    \end{tikzpicture}
    \caption{The $A$-cryptic I-projection $Q^*$ satisfies a Pythagorean identity.}
    \label{fig:pythagorean}
\end{figure}

\begin{remark}[Connection to optimal e-value design]\label{rem:NP}
If the conformity function is the Neyman--Pearson likelihood ratio $A(z) = t(z)/p(z)$, then $p_S(\alpha)/t_S(\alpha) = 1/\alpha$, and the $A$-cryptic I-projection density \eqref{eq:density} reduces to $q^*(z) = t(z)\cdot p(z)/t(z) = p(z)$. That is, $Q^* = P$: the cryptic component vanishes entirely. This aligns with the optimality result of \citet{hore2026conformal}, who prove that the likelihood ratio conformity function maximises detection power.
\end{remark}
An exact closed-form realisation of the $A$-cryptic I-projection in a bivariate Gaussian setting is provided in Appendix~\ref{app:gaussian}.

\section{Example: $A$-cryptic I-projection of EMNIST on MNIST}\label{sec:experiments}

We demonstrate that the blind spot $[P]_A$ can be exploited to construct the $A$-cryptic I-projection $Q^*$ on high-dimensional image data. We trained a multiclass logistic regression on MNIST digits \citep{lecun1988mnist} (92.6\% accuracy), yielding the conformity function $A(x) = \max_y\,\mathrm{softmax}(Wx+b)_y$. The true $A$-cryptic I-projection requires minimising the Kullback--Leibler divergence to a target distribution $T$ subject to the constraint $Q^*_A = P_A$. 

Let $T$ be the distribution of EMNIST letters \citep{cohen2017emnist}. Following the disintegration form in Theorem~\ref{thm:density}, we construct $Q^*$ not by perturbing pixels but by empirically reweighting $T$. For each digit $x \sim P$ in the test stream, we compute its conformity score $\alpha = A(x)$. We then search the EMNIST dataset for the $k=5$ letters whose scores minimise the absolute difference $|A(x') - \alpha|$, and sample one uniformly at random. This nearest-neighbor approach empirically samples from the conditional target distribution $T(\cdot \mid A(x)=\alpha)$. By directly resampling existing letters, $Q^*$ is strictly confined to the natural target manifold, completely eliminating numerical artefacts.

Figure~\ref{fig:mnist} illustrates the results. Because the matched sequence of scores $A(Q^*)$ tightly approximates $A(P)$, the CTM---a Sleeper/Stayer martingale \citep[Section 9.2.4]{alrw2}---for the cryptic stream $Q^*$ is entirely indistinguishable from the digit baseline. In contrast, the naive EMNIST stream $T$ is trivially detected, and the CTM grows exponentially.

This illustrates a profound diagnostic vulnerability: any conformity function aggressively compresses high-dimensional data (here, mapping $\mathbb{R}^{784} \to \mathbb{R}$), discarding large amounts of orthogonal information. This massive dimensionality reduction creates a dense equivalence class $[P]_A$ where the underlying data distribution can be shifted to an essentially disjoint support (from digits to letters) while leaving the test martingale mathematically blinded.

\begin{figure}[ht]
    \centering
    \includegraphics[width=\linewidth]{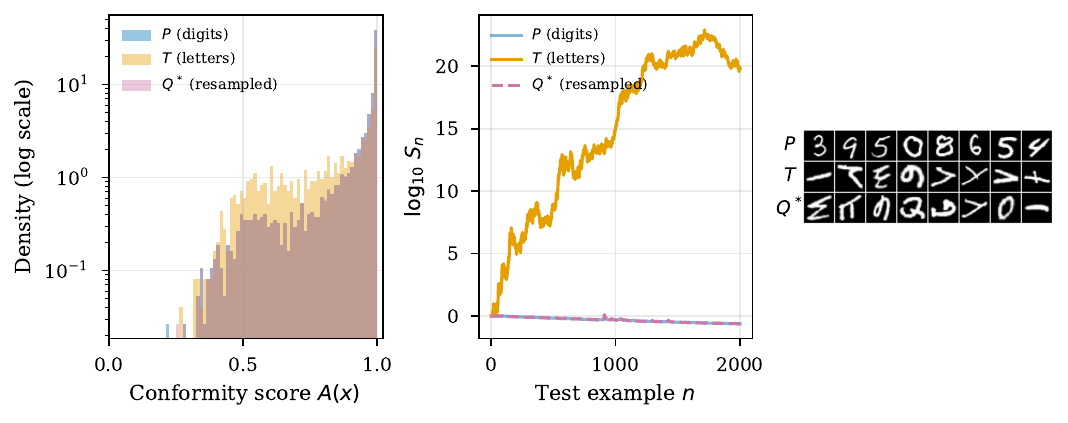}
    \caption{Empirical $A$-cryptic I-projection. \textbf{Left:} Conformity score densities. By construction, the cryptic projection $Q^*$ empirically matches the null $P$. \textbf{Middle:} Sleeper/Stayer martingales. The naive shift to $T$ is rapidly detected, whereas $Q^*$ perfectly tracks the null hypothesis. \textbf{Right:} Image triplets. The $Q^*$ samples are pristine letters, demonstrating that a distribution can be shifted to a disjoint support within the equivalence class $[P]_A$ without altering the test statistic.}
    \label{fig:mnist}
\end{figure}

\section{Discussion}\label{sec:discussion}

Every conformity function $A$ induces a structural blind spot---the equivalence class $[P]_A$---within which distribution shifts are invisible to all CTMs based on $A$. The $A$-cryptic I-projection decomposes any shift into a detectable score-marginal component and a cryptic fibre-conditional component, with the Pythagorean identity \eqref{eq:pythagoras} quantifying each.

\textbf{Adversarial implications.} An adversary who knows $A$ can craft distribution shifts that evade CTM-based monitoring entirely. The MNIST null-space injection demonstrates this concretely: visually altered images are invisible to the CTM. Unlike classical adversarial examples that target the classifier's predictions, these attacks target the \emph{monitoring system} itself---the conformal machinery that certifies that the data distribution has not changed. This represents a distinct threat model for deployed conformal prediction systems.

\textbf{Optimal conformity function design.} Remark~\ref{rem:NP} shows that the likelihood ratio $A(z) = t(z)/p(z)$ eliminates the cryptic component entirely ($Q^*=P$), connecting our blind-spot characterisation to the optimal e-value construction of \citet{hore2026conformal}. In practice, $t$ is unknown, but this result provides an information-geometric justification for designing conformity functions that approximate the likelihood ratio. However, the likelihood ratio score only ``closes down'' one specific direction where $A$-cryptic I-projection collapses. We speculate that ensembles of targeted likelihood ratio conformity functions may provide robustness in critical information-geometric directions.

\textbf{Inductive conformal prediction.} Our theory deals with \emph{fixed} conformity functions, which theoretically align with the inductive CP framework applied in the semi-online mode \citep[Section 4.2.2]{alrw2}. Full (or \emph{transductive}) CP involves updating the conformity function as new examples arrive, thereby transforming the fixed blind spot $[P]_A$ into a moving target. 

\textbf{Limitations.} The theory presented here assumes independent observations; the full characterisation of general (possibly dependent) processes is developed in an extended version.\footnote{An extended version develops the full measure-theoretic characterisation of $A$-cryptic pairs via disintegration of measures, stabiliser subgroups of the automorphism group, and efficiency analysis of conformal predictors under cryptic shifts.} The MNIST projection example is exact for linear conformity functions. For nonlinear models, the level sets of $A$ are curved manifolds, and exploiting them requires numerical methods such as constrained Langevin dynamics.

\textbf{Ethics statement.} While we demonstrate the utility of $A$-cryptic projections for theoretical analysis, we acknowledge that these results reveal a potential vulnerability in CTM-based monitoring. An adversary with knowledge of $A$ could potentially evade detection systems. We believe publishing this blind spot is essential for the development of robust, ``adversary-aware'' conformity functions, possibly based on ensembling, as discussed above.

\bibliography{references}
\bibliographystyle{plainnat}

\newpage
\appendix
\section{Proofs}\label{app:proofs}

The following lemma provides the ``coordinate system'' for our analysis. It allows us to view any probability measure $P$ on $\ZZ$ as a mixture of measures supported on the fibres (level sets) of the conformity function.
\begin{lemma}[Disintegration of Measures]\label{lemma:disintegration}
    Let $(\ZZ,\mathcal{F})$ be a standard Borel space and $A:\ZZ\to\mathbb{R}$ be a measurable map. For a probability measure $P$ on $\ZZ$, let the push-forward measure $P_S := P \circ A^{-1}$ be the marginal distribution of the conformity scores. Then there exists a $P_S$-almost everywhere unique family of probability measures $\{P_{\alpha}\}_{\alpha\in\mathbb{R}}$, called the disintegration kernel, satisfying
    \begin{itemize}%[label=(\roman*)]
        \item 
        For $P_S$-a.e $\alpha$, the measure $P_{\alpha}$ assigns full probability mass to the the fibre 
        $$
            A^{-1}(\alpha) = \{z\in\ZZ: A(z) = \alpha\}.
        $$
        That is, $P_{\alpha}(A^{-1}(\alpha)) = 1.$
        \item 
        For any event $E\in\mathcal{F}$, the probability $P(E)$ decomposes as
        $$
            P(E) = \int_{\mathbb{R}}P_{\alpha}(E)dP_S(\alpha).
        $$
    \end{itemize}
\end{lemma}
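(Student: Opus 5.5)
The plan is to obtain Lemma~\ref{lemma:disintegration} from the existence of regular conditional probabilities on standard Borel spaces. Consider the joint measure $\pi$ on $\mathbb{R}\times\ZZ$ given by the law of $(A(Z),Z)$ with $Z\sim P$, i.e. $\pi = P\circ (A,\mathrm{id})^{-1}$. Its first marginal is $P_S$. Because $\ZZ$ is standard Borel, there is a regular conditional distribution of $Z$ given $A(Z)$. This is a Markov kernel $\alpha\mapsto P_\alpha$ from $\mathbb{R}$ to $\ZZ$ with
$$
\pi(B\times E)=\int_B P_\alpha(E)\,dP_S(\alpha)
$$
for all Borel $B\subseteq\mathbb{R}$ and $E\in\mathcal{F}$. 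Taking $B=\mathbb{R}$ gives the mixture identity $P(E)=\int P_\alpha(E)\,dP_S(\alpha)$ directly.

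For the fibre-concentration property, take $E=A^{-1}(B)$. The left side is $P(A^{-1}(B)\cap A^{-1}(B))=P_S(B)$. Hence for each Borel $B$,
$$
P_\alpha(A^{-1}(B))=\mathbf 1_B(\alpha)\quad\text{for } P_S\text{-a.e. } \alpha.
$$
To upgrade this to a single null set, run $B$ over a countable $\pi$-system generating the Borel sets, such as rational intervals. Outside one null set $N$, the measures $B\mapsto P_\alpha(A^{-1}(B))$ and $\delta_\alpha$ agree on that $\pi$-system. By Dynkin's $\pi$--$\lambda$ theorem they therefore agree on all Borel sets. Applying this to $B=\{\alpha\}$ gives $P_\alpha(A^{-1}(\alpha))=1$ for $\alpha\notin N$. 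On $N$ we may redefine $P_\alpha$ arbitrarily, say as a fixed measure, without affecting the integral.

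For uniqueness, suppose $\{P'_\alpha\}$ is another kernel with both properties. Concentration on fibres turns the mixture identity into the joint identity. Indeed,
$$
\int_B P'_\alpha(E)\,dP_S=\int P'_\alpha(E\cap A^{-1}(B))\,dP_S=P(E\cap A^{-1}(B)),
$$
and the same holds for $P_\alpha$. So for each fixed $E$, $P_\alpha(E)=P'_\alpha(E)$ for $P_S$-a.e. $\alpha$. Since $\mathcal F$ is countably generated (standard Borel), choose a countable generating $\pi$-system. Intersecting countably many full-measure sets gives agreement of the two kernels on it for a.e. $\alpha$, and the $\pi$--$\lambda$ theorem extends agreement to all of $\mathcal F$.

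The main obstacle is the existence of the regular conditional distribution. I would invoke it as a standard theorem (e.g. Kallenberg, Thm.~6.3) rather than reprove it. If a self-contained route were wanted, I would transfer $\ZZ$ via a Borel isomorphism to $\mathbb{R}$ or $[0,1]$. There I would construct conditional CDFs at rational points, enforce monotonicity and right-continuity off a null set, and extend to measures. The remaining steps are routine null-set bookkeeping.
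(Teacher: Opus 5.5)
Your proof is correct, but it takes a different route from the paper. The paper's proof is a citation chain. It uses Ulam's theorem to see that $P$ is a Radon measure on a Polish space. It then invokes the Chang--Pollard disintegration theorem: their definition of a disintegration already includes fibre concentration, Theorem~1 gives existence and $P_S$-a.e.\ uniqueness, and Theorem~2(iii) gives that the $P_\alpha$ are a.e.\ probability measures.

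You start instead from the more elementary existence of a regular conditional distribution of $Z$ given $A(Z)$, available because $\ZZ$ is Borel. This gives the joint identity $P(E\cap A^{-1}(B))=\int_B P_\alpha(E)\,dP_S(\alpha)$. You then derive fibre concentration and uniqueness by hand, using countable generation of $\mathcal{B}(\mathbb{R})$ and of $\mathcal{F}$ together with the $\pi$--$\lambda$ theorem. Your route buys several things:
\begin{itemize}
\item It needs no detour through a metric and Radon measures.
\item Your kernel is a probability measure for every $\alpha$, not just almost every one.
\item It shows exactly where the hypotheses enter: countable generation of the score space is what Chang--Pollard assume.
\item Your uniqueness argument makes explicit that fibre concentration is what pins the kernel down. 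The mixture identity alone is satisfied by, e.g., the constant kernel $P_\alpha\equiv P$.
\end{itemize}
The paper's route is shorter and sits inside a framework that also covers $\sigma$-finite measures and general dominating measures on the score space.

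One small step should be tightened. The single identity $\int_B P_\alpha(A^{-1}(B))\,dP_S = P_S(B)$ only yields $P_\alpha(A^{-1}(B))=1$ for a.e.\ $\alpha\in B$. To get $P_\alpha(A^{-1}(B))=0$ for a.e.\ $\alpha\notin B$, you can either use the joint identity with $B^c$ as the outer set, which gives $\int_{B^c}P_\alpha(A^{-1}(B))\,dP_S=0$. Or apply the first conclusion to $B^c$ and use that each $P_\alpha$ is a probability measure.
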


\begin{proof}
    By definition, the standard Borel space $(\ZZ, \mathcal{F})$ can be equipped with a complete separable metric whose induced topology generates $\mathcal{F}$. Because $P$ is a probability measure on this generated $\sigma$-algebra, it is a finite Borel measure. By Ulam's theorem, every finite Borel measure on a complete separable metric space is tight and thus qualifies as a Radon measure \citep[Theorem 1.3]{billingsley1999convergence}. 
    Furthermore, the target space $\mathbb{R}$ is equipped with the Borel $\sigma$-algebra $\mathcal{B}(\mathbb{R})$, which is countably generated and contains all singleton sets.

    As $P$ is a finite Radon measure, we can apply the disintegration theorem of \citet{chang1997conditioning}. According to \citet[Theorem 1]{chang1997conditioning}, there exists a disintegration of $P$ with respect to the measurable map $A$ and the image measure $P_S$, and this family of disintegrating measures is uniquely determined $P_S$-almost everywhere. Furthermore, by \citet[Theorem 2 (iii)]{chang1997conditioning}, because $P_S$ is exactly the image measure of $P$, the disintegrating measures $P_\alpha$ are probability measures for $P_S$-almost all $\alpha$. By the definition of disintegration, these measures are concentrated on the fibres $A^{-1}(\alpha)$. The integral decomposition follows directly from this definition. 
\end{proof}

\begin{proof}[Proof of Theorem~\ref{thm:iprojection}]
    We seek to minimise
    $$
        D_{KL}(Q\parallel T) = \int_{\ZZ}\log\bigg(\frac{dQ}{dT}(z)\bigg)dQ(z)
    $$
    subject to the constraint $Q\in[P]_A$, which is equivalent to the marginal constraint $Q\circ A^{-1} = P_S$.
    
    By the chain rule for KL divergence (see e.g. \citet[Theorem 2.15]{Polyanskiy_Wu_2025}),
    $$
        D_{KL}(Q \parallel T) = D_{KL}(Q_S \parallel T_S) + D(Q_{Z\mid S} \parallel T_{Z\mid S} \mid Q_S),
    $$
    where the conditional term is defined as the expected divergence over the fibres:
    $$
    D(Q_{Z\mid S} \parallel T_{Z\mid S} \mid Q_S) = \mathbb{E}_{s \sim Q_S} \left[ D_{KL}(Q_{\alpha} \parallel T_{\alpha}) \right] = \int_{\mathbb{R}} D_{KL}(Q_{\alpha} \parallel T_{\alpha})  dQ_S(\alpha).
    $$
    The constraint implies that for any feasible $Q$, we must have $Q_S = P_S$ almost everywhere. Furthermore, the assumption $D_{KL}(P_S \parallel T_S) < \infty$ strictly implies that $P_S \ll T_S$. Consequently, any $T_S$-null set is also a $P_S$-null set. This guarantees that the target disintegration kernel $T_{\alpha}$, which is uniquely defined $T_S$-almost everywhere by Lemma \ref{lemma:disintegration}, is well-defined $P_S$-almost everywhere. This allows us to validly substitute $P_S$ into the decomposition
    $$
        D_{KL}(Q \parallel T) = D_{KL}(P_S \parallel T_S) + \int_{\mathbb{R}} D_{KL}(Q_{\alpha} \parallel T_{\alpha})  dP_S(\alpha).
    $$
    The term $D_{KL}(P_S \parallel T_S)$ is determined solely by the fixed distributions $P$ and $T$, so it is constant with respect to the optimisation variable $Q$. By assumption, it is finite. The term $\int_{\mathbb{R}} D_{KL}(Q_{\alpha} \parallel T_{\alpha})  dP_S(\alpha)$ involves the disintegration kernels $Q_{\alpha}$ and $T_{\alpha}$. From the information inequality \citep[Theorem 2.3]{Polyanskiy_Wu_2025}, $D_{KL}(Q_{\alpha} \parallel T_{\alpha})\geq0$ with equality if and only if $Q_{\alpha}=T_{\alpha}$ for $P_S$-almost all $\alpha$.

    This minimisation logic uniquely identifies the components of the candidate measure $Q^*$. The constraint enforces $Q^*_S = P_S$, and setting the disintegration kernels $Q_{\alpha}=T_{\alpha}$ minimises the integral term.
    
    By construction, $Q^*\circ A^{-1} = P_S = P\circ A^{-1}$, which implies $Q^*\in[P]_A$. Thus, $Q^*$ is a feasible solution to the minimisation problem. Moreover, by Lemma \ref{lemma:disintegration}, $Q^*$ is uniquely determined by its marginal and its disintegration kernel, which means that the measure $Q^*$ defined by
    \begin{equation}\label{eq:I-projection-disintegration}
        Q^*(E) = \int_{\mathbb{R}} T_{\alpha}(E)  dP_S(\alpha)
    \end{equation}
    is the unique minimiser.
\end{proof}

\begin{proof}[Proof of Theorem~\ref{thm:density}]
    We begin by finding the density of the measure $Q^*$ relative to $T$. Let $w(\alpha) = \frac{dP_S}{dT_S}(\alpha)$ be the Radon-Nikodym derivative of the marginals. Because $P_S$ and $T_S$ possess densities $p_S$ and $t_S$ with respect to a common dominating measure, we have $w(\alpha) = \frac{p_S(\alpha)}{t_S(\alpha)}$ almost everywhere.
    Substituting $dP_S(\alpha) = w(\alpha) dT_S(\alpha)$ into \eqref{eq:I-projection-disintegration}, we get
    $$
        Q^*(E) = \int_{\mathbb{R}} T_{\alpha}(E)  w(\alpha)  dT_S(\alpha).
    $$
    This integral can be rewritten as an expectation over the target distribution $T$. Recall that $T_{\alpha}(E) = P_T(Z \in E \mid S=\alpha)$. Thus,
    $$
    \begin{aligned}
            Q^*(E) &= \int_{\mathbb{R}} P_T(Z \in E \mid S=\alpha)  w(\alpha)  dT_S(\alpha) \\
            &= \mathbb{E}_{T} \left[ \mathds{1}_E(Z)  w(S) \right] \quad \text{(Law of Total Expectation)} \\
            &= \int_Z \mathds{1}_E(z)  w(A(z))  dT(z).
            \end{aligned}
    $$
    This shows that $Q^*$ is absolutely continuous with respect to $T$ with Radon-Nikodym derivative
    $$
        \frac{dQ^*}{dT}(z) = w(A(z)) = \frac{p_S(A(z))}{t_S(A(z))}.
    $$
    Finally, the density with respect to the reference measure $\nu$ is
    $$
        q^*(z) = \frac{dQ^*}{dT}(z) \frac{dT}{d\nu}(z) = t(z)\frac{p_S(A(z))}{t_S(A(z))},
    $$
    which is the stated form.
\end{proof}

\section{Analytic Gaussian $A$-cryptic I-Projection}\label{app:gaussian}

    To confirm the exact mathematical mechanics of this projection and eliminate estimation artefacts, we derive a closed-form analytic projection in an idealised parametric setting. We take inspiration from \citet[Chapter 3]{alrw2} and use the idealised \emph{conditional density conformity measure} (CDCM), the true conditional density function under $P$. As this is a fixed function $\ZZ\to\mathbb{R}$, it aligns naturally with our theoretical framework using conformity functions. 

    We define the example space $\ZZ = \mathbb{R}^2$. The null distribution $P$ is specified as a zero-mean bivariate Gaussian distribution $P = \mathcal{N}(0, \Sigma)$ with covariance matrix 
    $$
        \Sigma = \begin{pmatrix} 1 & \rho \\ \rho & 1 \end{pmatrix}
    $$
    where $\rho=0.8$. We introduce a non-cryptic target distribution $T = \mathcal{N}(\mu_{T}, \Sigma)$, where $\mu_T\neq0$, indicating a mean shift. Note that while the covariance structure is preserved, the shift in the mean introduces a significant bias in the conditional expectation $\mathbb{E}[Y|X]$.

    We use the idealised CDCM as conformity function, which is given by
    \begin{equation}
        A(x,y) = f_{P}(y|x) = \frac{1}{\sqrt{2\pi(1-\rho^{2})}} \exp\left( -\frac{(y - \rho x)^{2}}{2(1-\rho^{2})} \right).
    \end{equation}
    This choice is theoretically instructive because the $\sigma$-algebra generated by a continuous conformity function $\sigma(A)$ is generally a proper sub-$\sigma$-algebra of $\mathcal{F}$. Here, the score $S = A(z)$ is a strictly monotonic function of the squared standardised residual $R^{2}(z) = \frac{(y - \rho x)^{2}}{1-\rho^{2}}$. Consequently, $A$ generates the same $\sigma$-algebra as $R^{2}$. The information lost by this conformity function is the sign of the residual and marginal distribution of $X$; the conformity function is blind to any shift that preserves the distribution of the error magnitude.

    According to Theorem \ref{thm:density}, the density of the I-projection is given by \eqref{eq:density}, and 
    because the mapping between the score $S$ and squared residual $R^{2}$ is bijective, the likelihood ratio of the scores is identical to the likelihood ratio of the residuals. Thus, we can compute the weights analytically by characterising the distributions of $R^{2}$ under $P$ and $T$. Under $P$, the standardised residual follows a standard normal distribution, implying that $R^{2} \sim \chi^{2}_{1}$ (central chi-squared distribution with 1 degree of freedom). Under $T$, owing to the mean shift, the residual $Y - \rho X$ has a non-zero mean. Consequently, $R^{2}$ follows a non-central chi-squared distribution $R^{2} \sim \chi'^{2}_{1}(\lambda)$, where the non-centrality parameter $\lambda$ is determined by the conditional bias at the target mean.
    
    The I-projection is therefore explicitly constructed by reweighting $T$ as 
    \begin{equation}\label{eq:gauss-q-density}
        q^{*}(z) = t(z) \frac{f_{\chi^{2}_{1}}(R^{2}(z))}{f_{\chi'^{2}_{1}(\lambda)}(R^{2}(z))}.
    \end{equation}
    While \eqref{eq:gauss-q-density} defines the exact density, evaluating it via rejection sampling is unnecessary. We can construct an exact generative sampler by exploiting the independence structure of the $I$-projection. Let the standardized residual be $R(z) = \frac{y - \rho x}{\sqrt{1-\rho^2}}$, which can be decomposed into a magnitude $\vert{}R\vert{}$ and a sign $S \in \{-1, 1\}$.Because the conformity function $A$ restricts only the distribution of $\vert{}R\vert{}$, the $I$-projection $Q^*$ must adopt the distribution of $\vert{}R\vert{}$ under $P$, which is a standard half-normal distribution. To minimize the Kullback-Leibler divergence to $T$, $Q^*$ retains the target's marginal distribution for $X$ and the target's conditional distribution for the sign given the magnitude, $T(S \mid \vert{}R\vert{})$.We can therefore sample exactly from $Q^*$ using the following procedure:
    \begin{enumerate}
        \item 
        Draw the feature $X \sim \mathcal{N}(\mu_{T_X}, 1)$.
        \item 
        Draw the residual magnitude $\vert{}R\vert{}$ from the null distribution $P$, equivalent to $\vert{}R\vert{} = \vert{}Z\vert{}$ where $Z \sim \mathcal{N}(0, 1)$.
        \item 
        Draw the sign $S \in \{-1, 1\}$ from the conditional target distribution. Under $T$, $R \sim \mathcal{N}(\mu_R, 1)$ where $\mu_R = \frac{\mu_{T_Y} - \rho \mu_{T_X}}{\sqrt{1-\rho^2}}$. The conditional probability of a positive sign is exactly $P_T(S=1 \mid \vert{}R\vert{}) = (1 + \exp(-2 \mu_R \vert{}R\vert{}))^{-1}$.
        \item 
        Reconstruct the response $Y = \rho X + S \vert{}R\vert{} \sqrt{1-\rho^2}$.
    \end{enumerate}
    \begin{figure}[ht]
        \centering
        \includegraphics[width=\linewidth]{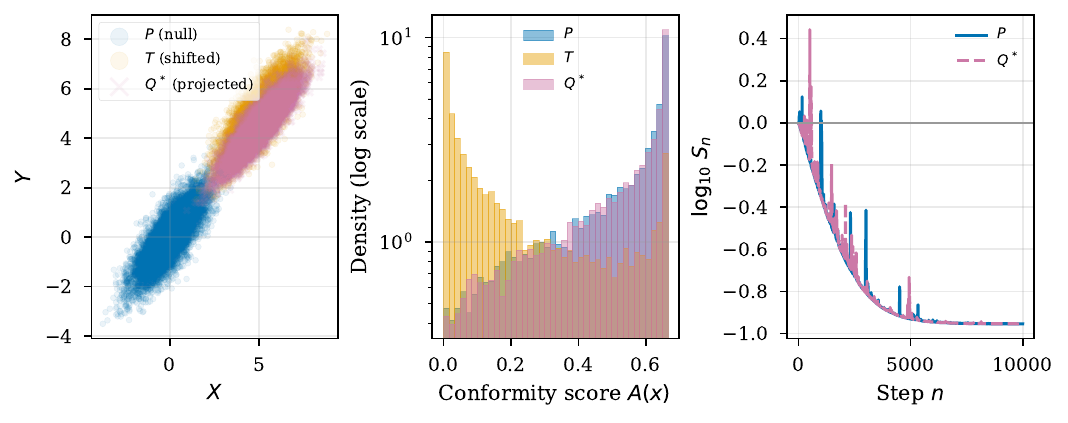}
        \caption{Exact $A$-cryptic I-projection in the Gaussian setting. Left: $T$ with $\mu_T = [5,5]$ (orange) is distant from $P$ (blue) in feature space, but $Q^*$ (purple) reweights $T$ to recover $P$'s score distribution (middle). The CTM for $Q^*$ remains non-increasing (right).}
        \label{fig:gaus-proj-5.0,5.0}
    \end{figure}
    \begin{figure}[ht]
        \centering
        \includegraphics[width=\linewidth]{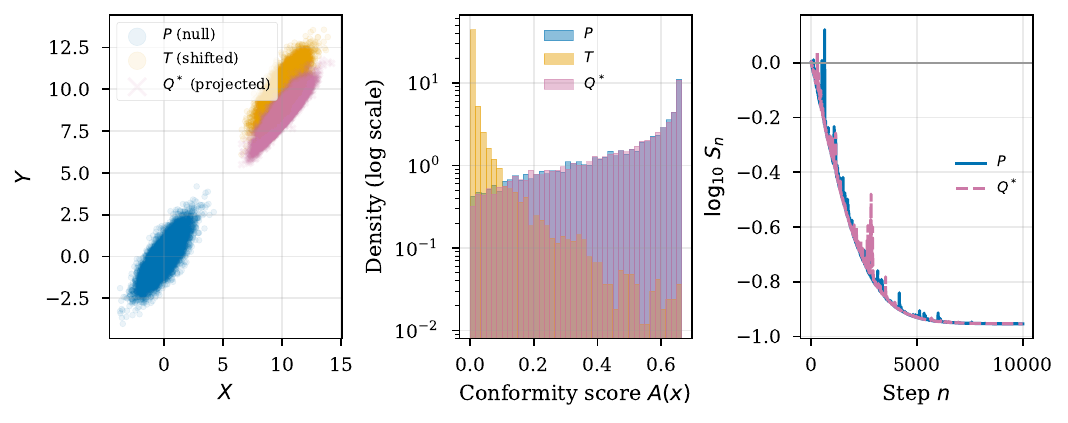}
        \caption{Exact $A$-cryptic I-projection in the Gaussian setting. Left: $T$ with $\mu_T = [10,10]$ (orange) is distant from $P$ (blue) in feature space, but $Q^*$ (purple) reweights $T$ to recover $P$'s score distribution (middle). The CTM for $Q^*$ remains non-increasing (right).}
        \label{fig:gaus-proj-10.0,10.0}
    \end{figure}
    \begin{figure}[ht]
        \centering
        \includegraphics[width=\linewidth]{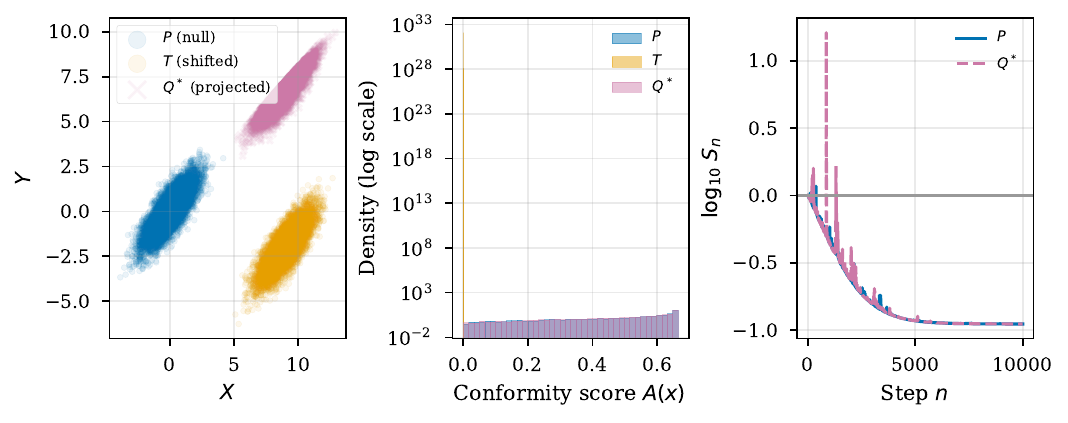}
        \caption{Exact $A$-cryptic I-projection in the Gaussian setting. Left: $T$ with $\mu_T = [9,-2]$ (orange) is distant from $P$ (blue) in feature space, but $Q^*$ (purple) reweights $T$ to recover $P$'s score distribution (middle). The CTM for $Q^*$ remains non-increasing (right).}
        \label{fig:gaus-proj-9.0,-2.0}
    \end{figure}
    Figures \ref{fig:gaus-proj-5.0,5.0}-\ref{fig:gaus-proj-9.0,-2.0} illustrate the outlined procedure for $\mu_T = [5,5], [10,10],[9,-2]$. In all cases, the CTM for $T$ (whose trajectory is not shown in the plot) exceeds the floating-point range ($S_{\max}>10^{308}$), confirming immediate detection. However, the CTM for $Q^*$ follows closely the trajectory of the CTM for $P$, and in particular, never raises an alarm despite the dramatic difference in the samples shown in the left panels.

\end{document}